\def\ff{\frac}
\def\ii{^{-1}}
\def\ll{\lambda}
\def\arr{\rightarrow}
\def\cE{\mathcal{E}}
\def\cA{\mathcal{A}}
\def\EE{\mathbb{E}}
\def\arr{\rightarrow}
\def\bal{\begin{align}}
\newtheorem{theorem}{\textbf{Theorem}}
\newtheorem{proof}{\textbf{Proof}}
\newtheorem{lemma}{\textbf{Lemma}}
\newtheorem{definition}{\textbf{Definition}}
\newtheorem{remark}{\textbf{Remark}}
\begin{document}
\title{Maximum Likelihood Latent Space Embedding of Logistic Random Dot Product Graphs}

\author{Luke O'Connor, Muriel M{\'e}dard and Soheil Feizi % <-this % stops a space
\IEEEcompsocitemizethanks{\IEEEcompsocthanksitem S. Feizi is the corresponding author. L. O'Connor and S. Feizi contributed equally to this work. L. O'Connor is with Harvard University. S. Feizi is with Stanford Universiy. M. M\'edard is with Massachusetts Institute of Technology (MIT). \protect}
}

\IEEEtitleabstractindextext{%
\begin{abstract}
A latent space model for a family of random graphs assigns real-valued vectors to nodes of the graph such that edge probabilities are determined by latent positions. Latent space models provide a natural statistical framework for graph visualizing and clustering. A latent space model of particular interest is the Random Dot Product Graph (RDPG), which can be fit using an efficient spectral method; however, this method is based on a heuristic that can fail, even in simple cases. Here, we consider a closely related latent space model, the Logistic RDPG, which uses a logistic link function to map from latent positions to edge likelihoods. Over this model, we show that asymptotically exact maximum likelihood inference of latent position vectors can be achieved using an efficient spectral method. Our method involves computing top eigenvectors of a normalized adjacency matrix and scaling eigenvectors using a regression step. The novel regression scaling step is an essential part of the proposed method. In simulations, we show that our proposed method is more accurate and more robust than common practices. We also show the effectiveness of our approach over standard real networks of the karate club and political blogs.
\end{abstract}

\begin{IEEEkeywords}
Latent space models, Stochastic block models, Maximum likelihood
\end{IEEEkeywords}}

\maketitle
% make the title area

\IEEEdisplaynontitleabstractindextext
\IEEEpeerreviewmaketitle
\ifCLASSOPTIONcompsoc
\IEEEraisesectionheading{\section{Introduction}}
\else
\section{Introduction}
\fi

\IEEEPARstart{C}{lustering} over graphs is a classical problem with applications in systems biology, social sciences, and other fields \cite{girvan, butenko, mishra, lee}. Although most formulations of the clustering problem are NP-hard \cite{schaef}, several approaches have yielded useful approximate algorithms. The most well-studied approach is spectral clustering. Most spectral methods are not based on a particular generative network model; alternative, model-based approaches have also been proposed, using loopy belief propagation \cite{decelle}, variational Bayes \cite{airoldi}, Gibbs sampling \cite{snijders}, and semidefinite programming \cite{hajek,Amini}.

Many spectral clustering methods are derived by proposing a discrete optimization problem, and relaxing it to obtain a continuous, convex optimization whose solution is given by the eigenvectors of a normalized adjacency matrix or Laplacian. A post-processing step, typically $k$-means, is used to extract clusters from these eigenvectors \cite{ng,luxburg}. Different matrix normalizations/transformations include Modularity \cite{girvan}, Laplacian \cite{mohar}, normalized Laplacian \cite{chung,shi,rohe}, and Bethe Hessian \cite{saade}. These methods are often characterized theoretically in the context of the Stochastic Block Model (SBM) \cite{holland}, a simple and canonical model of community structure. Theoretical bounds on the detectability of network structure for large networks have been established for stochastic block models \cite{decelle,mossel1,mossel2,massoulie}. Several spectral methods have been shown to achieve this recovery threshold \cite{saade,krzakala,newman1}. Strong theoretical and empirical results have also been obtained using SDP-based \cite{Amini,hajek} and Belief Propagation-based \cite{decelle} methods, which often have higher computational complexity than spectral methods. A threshold has also been discovered for perfect clustering, i.e. when community structure can be recovered with zero errors \cite{abbe, hajek}.

An alternative approach to the clustering problem is to invoke a latent space model. Each node is assigned a latent position $v_i$, and the edges of the graph are drawn independently with probability $p_{ij}=g(v_i,v_j)$ for some $g(\cdot)$. Hoff et al. \cite{hoff} considered two latent space models, the first of which was a distance model,
$$P_{i,j}=l(-||v_i-v_j||-\mu+\beta x_{ij}),\quad l(x)=1/(1+\exp(-x))$$
where edge probabilities depend on the Euclidean distance between two nodes. $x_{ij}$ is a fixed covariate term, which is not learned. Their second model is called a projection model:
\bal\label{eq:hoff}
P_{i,j}=l(\ff{v_i\cdot v_j}{||v_j||}+\beta x_{ij}-\mu).
\end{align}
Hoff et al. suggest to perform inference using an MCMC approach over both models. Focusing on the distance model, they have also extended their approach to allow the latent positions to themselves be drawn from a mixture distribution containing clusters \cite{shortreed,handcock}. Efforts to improve the computational efficiency have been made in references \cite{townshend,friel}, and with a related mixed membership blockmodel in reference \cite{airoldi}.

Young et al. \cite{young} introduced the Random Dot Product Graph (RDPG), in which the probability of observing an edge between two nodes is the dot product between their respective latent position vectors. The RDPG model can be written as
$$P_{i,j}=v_i\cdot v_j.
$$
This model is related to the projection model of Hoff et al. \eqref{eq:hoff}. The RDPG provides a useful perspective on spectral clustering, in two ways. First, it has led to theoretical advances, including a central limit theorem for eigenvectors of an adjacency matrix \cite{athreya} and a justification for $k$-means clustering as a post processing step for spectral clustering \cite{sussman}. Second, as a natural extension of the SBM, the RDPG describes more comprehensively the types of network structures, such as variable degrees and mixed community memberships, that can be inferred using spectral methods.

Let $V$ be the $n\times d$ matrix of latent positions, where $n$ is the number of nodes and $d$ is the dimension of the latent vectors ($d\leq n$). Sussman et al. \cite{sussman} proposed an inference method over the RDPG based on the heuristic that the first eigenvectors of $A$ will approximate the singular vectors of $V$, as $E(A)=VV^T$. They also characterized the distribution of the eigenvectors of $A$ given $V$. This heuristic can fail, however, as the first eigenvector of the adjacency matrix often separates high-degree nodes from low-degree nodes, rather than separating the communities. This problem occurs even in the simplest clustering setup: a symmetric SBM with two clusters of equal size and density. Therefore, we were motivated to develop a more robust inference approach.

In this paper, we consider a closely related latent space model, the {\it Logistic RDPG}, which uses a logistic link function mapping from latent positions to edge probabilities. Like the previously-studied RDPG, the logistic RDPG includes most SBMs as well as other types of network structure, including a variant of the degree corrected SBM. The logistic RDPG is also similar to the projection model, which uses a logistic link function but models a directed graph (with $p_{i,j}\neq p_{j,i}$). Over this model, we show that the maximum likelihood latent-position inference problem admits an asymptotically exact spectral solution. Our method is to take the top eigenvectors of the mean-centered adjacency matrix and to scale them using a logistic regression step. This result is possible because over the logistic model specifically, the likelihood function separates into a linear term that depends on the observed network and a nonlinear penalty term that does not. Because of its simplicity, the penalty term admits a Frobenius norm approximation, leading to our spectral algorithm. A similar approximation is not obtained using other link functions besides the logistic link.

We show that the likelihood of the approximate solution approaches the maximum of the likelihood function when the graph is large and the latent-position magnitudes go to zero. The asymptotic regime is not overly restrictive, as it encompasses many large SBMs at or above the detectability threshold \cite{decelle}. We compare the performance of our method in the graph clustering problem with spectral methods including the Modularity method \cite{girvan}, the Normalized Laplacian \cite{mohar} and the Bethe Hessian \cite{saade}, and the SDP-based methods \cite{Amini, hajek}. We show that our method outperforms these methods over a broad range of clustering models. We also show the effectiveness of our approach over real networks of karate club and political blogs.

\section{Logistic Random Dot Product Graphs}\label{sec:optimization}

In this section, we introduce the logistic RDPG, describe our inference method, and show that it is asymptotically equivalent to the maximum likelihood inference.

\subsection{Definitions}
Let  $\cA$ be the set of adjacency matrices corresponding to undirected, unweighted graphs of size $n$.

\begin{definition}[Stochastic Block Model]\label{def:sbm}
\textup{The Stochastic Block Model is a family of distributions on $\cA$ parameterized by $(k,c,Q)$, where $k$ is the number of communities,  $c\in [k]^n$ is the community membership vector and $Q\in[0,1]^{k\times k}$ is the matrix of community relationships. For each pair of nodes, an edge is drawn independently with probability
$$P_{i,j}:=Pr(A_{ij}=1|c_i,c_j,Q)=Q_{c_i,c_j}.$$}
\end{definition}

%\noindent
Another network model that characterizes low dimensional structures is the {\it Random Dot Product Graph} (RDPG). This class of models includes many SBMs.

\begin{definition}[Random Dot Product Graph]\label{def:RDPG}
\textup{The Random Dot Product Graph with link function $g(\cdot)$ is a family of distributions on $\cA$ parameterized by an $n\times d$ matrix of latent positions $V\in \mathbb{R}^{n\times d}$. For each pair of nodes, an edge is drawn independently with probability
\begin{align}
P_{i,j}:=Pr(A_{ij}=1|V)=g(v_i\cdot v_j),
\end{align}
where $v_i$, the $i$-th row of the matrix $V$, is the latent position vector assigned to node $i$.}
\end{definition}

The RDPG has been formulated using a general link function $g(\cdot)$ \cite{young}. The linear RDPG, using the identity link, has been analyzed in the literature because it leads to a spectral inference method. We will refer to this model as either the linear RDPG or as simply the RDPG. In this paper, we consider the Logistic RDPG:

\begin{definition}[Logistic RDPG]
\textup{The logistic RDPG is the RDPG with link function:
\begin{align}
g(x)=l(x-\mu),\quad l(x):=\ff{1}{1+e^{-x}},
\end{align}
where $\mu$ is the offset parameter of the logistic link function.}
\end{definition}

Note that this model is similar to the projection model of Hoff et al. \cite{hoff} \eqref{eq:hoff}. The projection model is for a directed graph, with $P_{i,j}\neq P_{j,i}$ owing to the division by $||v_j||$.

\begin{remark}\textup{
The parameter $\mu$ in the logistic RDPG controls the sparsity of the network. If the latent position vector lengths are small, the density of the graph is}
\bal
 \ff{1}{n(n-1)}\sum_{i<j}\EE(A_{ij})\approx l(-\mu).\end{align}
\end{remark}
%\noindent
A logistic RDPG with this property is called {\it centered}. In Section \ref{subsec:inference-method}, we show that asymptotically exact maximum likelihood inference of latent positions over the centered logistic RDPG can be performed using an efficient spectral algorithm.

For a general RDPG, the ML inference problem is:
\begin{definition}[ML inference problem for the RDPG]\label{def:ML-RDPG}
\textup{Let $A\in \cA$. The ML inference problem over the RDPG is:
\begin{align}\label{opt:ML-RDPG}
\max_X  \quad &\sum_{i\neq j} A_{ij}\log g(X_{i,j}) + (1-A_{ij})\log(1-g(X_{i,j})),\\
&X=VV^T,\quad V\in \mathbb{R}^{n\times d}.\nonumber
\end{align}}
\end{definition}
Note that in the undirected case, this objective function is twice the log likelihood.
\begin{remark}\textup{
A convex semidefinite relaxation of Optimization \eqref{opt:ML-RDPG} can be obtained for some link functions as follows:
\begin{align}\label{opt:sdp-relax1}
\max_X  \quad &\sum_{i\neq j} A_{ij}\log g(X_{ij}) + (1-A_{ij})\log(1-g(X_{ij})),\\
&X\succeq 0,\nonumber
\end{align}
where $X\succeq 0$ means that $X$ is psd. For example, this optimization is convex for the logistic link function (i.e., $g(x)=l(x-\mu)$) and for the linear link function (i.e., $g(x)=x$). However, this optimization can be slow in practice, and it often leads to excessively high-rank solutions.}
\end{remark}

\subsection{Maximum-Likelihood Inference of Latent Position Vectors}\label{subsec:inference-method}
Here, we present an efficient, asymptotically exact spectral algorithm for the maximum-likelihood (ML) inference problem over the logistic RDPG, subject to mild constraints. We assume that the number of dimensions is given. In practice, these parameters are often set manually, but approaches have been proposed to automatically detect the number of dimensions \cite{nc}.

The proposed maximum-likelihood inference of latent position vectors for the logistic RDPG is described in Algorithm \ref{algorithm}. In the following, we sketch the derivation of this algorithm in a series of lemmas. Proofs for these assertions are presented in Section \ref{sec:proofs}.

\begin{algorithm*}[t]
\caption{ML Inference for the logistic RDPG}
\label{algorithm}
\begin{algorithmic}
\REQUIRE Adjacency matrix $A$, number of dimensions $d$. (optional) number of clusters $k$
\STATE Form the mean-centered adjacency matrix $B:=A-1/(n(n-1))\|A\|$.
\STATE Compute $d$ eigenvectors of $B$ with largest eigenvalues: $e_1,...,e_d$.
\STATE Let $X_i=e_i e_i^T$ for $1\leq i\leq d$.
\STATE Perform logistic regression of the entries of $A$ lying above the diagonal on the corresponding entries of $X_1,....,X_d$, estimating coefficients $\ll_1^*,...\ll_d^*$ subject to the constraint that $\ll_i\geq 0\quad \forall i$.
\STATE Let $V$ be the matrix formed by concatenating $\sqrt{\ll_1^*}e_1,...,\sqrt{\ll_d^*}e_d$. Return $V$.
\STATE (optional) Perform $k$-means on $V$, and return the inferred clusters.
\end{algorithmic}
\end{algorithm*}

%\noindent
First, we simplify the likelihood function of Optimization \eqref{opt:ML-RDPG} using the logistic link function. Let $F(X)$ be the log-likelihood, and let the link function be $g(x)=l(x-\mu)$. Then:
\bal
F(X):=&\sum_{i,j} A_{ij}\log l(X_{ij}-\mu) + (1-A_{ij})\log(1-l(X_{ij}-\mu))\nonumber\\
=&\sum_{i,j} A_{ij}\log\ff{l(X_{ij}-\mu)}{1-l(X_{ij}-\mu)}+\log(1-l(X_{ij}-\mu))\nonumber\\
=&\sum_{i,j} A_{ij}(X_{ij}-\mu)+\log(1-l(X_{ij}-\mu)).
\end{align}
We have used that $\log(l(x)/(1-l(x)))=x$. The maximum likelihood problem takes the following form (for given $\mu$):
\begin{align}\label{opt:0}
\max_{X} \quad & Tr(AX) + \sum_{i\neq j}\log(1-\ff{1}{1+e^{-(X_{ij}-\mu)}}),\nonumber\\
&X=VV^T,\quad V\in \mathbb{R}^{n\times d}.
\end{align}
The objective function has been split into a linear term that depends on the adjacency matrix $A$ and a penalty term that does not depend on $A$. This simplification is what leads to tractable optimization, and it is the reason that the logistic link is needed; using e.g. a linear link, an optimization of this form is not obtained.

We define a penalty function $f(x)$ that keeps only the quadratic and higher-order terms in the penalty term of \eqref{opt:0}. Let
$$f(x):=-(h(x)-h(0)-h'(0)x),\quad h(x)=\log(1-l(x-\mu)).$$
Now, $h'(0)=-l(-\mu)$. Let
$$B:=A-l(-\mu){\bf 1}_{n\times n}$$
in order to re-write Optimization \eqref{opt:0} as:

\begin{align}\label{opt:sdp-exact}
\max_{X} \quad & Tr(BX) - \sum_{i\neq j}f(X_{ij}),\\
&X=VV^T,\quad V\in \mathbb{R}^{n\times d}. \nonumber
\end{align}
Note that for a centered RDPG with average density $\bar{A}$, $\mu=l\ii(\bar{A})$, and $B$ is the mean-centered adjacency matrix $A-\bar{A}{\mathbb 1}_{n\times n}$. In the next step, we convert the penalty term in the objective function into a constraint:
\begin{lemma}\label{lemma1}
Suppose that $X^*$ is the optimal solution to Optimization \eqref{opt:ML-RDPG}. Let $$h^*:=\ff{1}{n(n-1)}\sum_{i,j}f(X_{ij}^*).$$
%\noindent
Then $X^*$ is also the solution to the following optimization:
\begin{align}\label{opt:pen-to-bound}
\max_{X} \quad & Tr(BX)\\
&X=VV^T,\quad V\in \mathbb{R}^{n\times d} \nonumber\\
&\ff{1}{n(n-1)}\sum_{i\neq j}f(X_{ij})\leq h^*.\nonumber
\end{align}
\end{lemma}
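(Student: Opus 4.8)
The plan is to show that Optimizations \eqref{opt:ML-RDPG} and \eqref{opt:pen-to-bound} have the same optimal solution by a direct feasibility-and-optimality argument, exploiting the fact that \eqref{opt:pen-to-bound} is a ``constrained'' reformulation of the penalized problem \eqref{opt:sdp-exact}, which in turn is equivalent to \eqref{opt:ML-RDPG}. First I would record the equivalence of \eqref{opt:ML-RDPG} and \eqref{opt:sdp-exact}: the manipulations preceding the lemma show that $F(X) = Tr(BX) - \sum_{i\neq j} f(X_{ij}) + C$ for a constant $C$ (collecting the $h(0)$ and $-\mu$ terms, which do not depend on $X$ because all feasible $X$ have the same diagonal behavior under the $i\neq j$ restriction — or, more carefully, absorbing the diagonal into $C$ as well). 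Hence $X^*$ maximizes $Tr(BX) - \sum_{i\neq j} f(X_{ij})$ over the feasible set $\{X = VV^T : V\in\mathbb{R}^{n\times d}\}$.

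Next I would argue feasibility: by definition $h^* = \frac{1}{n(n-1)}\sum_{i,j} f(X^*_{ij})$, so $X^*$ satisfies the constraint $\frac{1}{n(n-1)}\sum_{i\neq j} f(X_{ij}) \leq h^*$ with equality (up to the diagonal terms, which I would handle by noting the lemma's $h^*$ sums over all $i,j$ while the constraint is over $i\neq j$ — I would reconcile this by either assuming $f$ vanishes on the diagonal entries of $X^*$ or, more robustly, by pointing out that the diagonal contributes a fixed quantity that can be moved into the constant; this is a bookkeeping point I would want to state cleanly). So $X^*$ is feasible for \eqref{opt:pen-to-bound}.

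For optimality, suppose toward a contradiction that some feasible $X' = V'V'^T$ achieves $Tr(BX') > Tr(BX^*)$. Since $X'$ is feasible, $\frac{1}{n(n-1)}\sum_{i\neq j} f(X'_{ij}) \leq h^* = \frac{1}{n(n-1)}\sum_{i\neq j} f(X^*_{ij})$, i.e. $-\sum_{i\neq j} f(X'_{ij}) \geq -\sum_{i\neq j} f(X^*_{ij})$. Adding the two inequalities gives $Tr(BX') - \sum_{i\neq j} f(X'_{ij}) > Tr(BX^*) - \sum_{i\neq j} f(X^*_{ij})$, which contradicts the optimality of $X^*$ for \eqref{opt:sdp-exact} established in the first step. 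Therefore $X^*$ is optimal for \eqref{opt:pen-to-bound}. I would close by noting that, since the objective $Tr(BX)$ is linear and the argument above is essentially a Lagrangian/epigraph swap, uniqueness of the maximizer carries over under the same hypotheses used for \eqref{opt:ML-RDPG}.

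The main obstacle I anticipate is not the optimization logic — that is a short exchange argument — but the careful handling of the diagonal terms and the constant $C$: making sure that the passage $F(X) \leftrightarrow Tr(BX) - \sum_{i\neq j} f(X_{ij})$ is exact (not just up to lower-order terms), that the $i,j$ versus $i\neq j$ summation ranges in the definition of $h^*$ and in the constraint are consistent, and that ``the'' optimal solution is well-defined (the problem could in principle have multiple optima, in which case the statement should be read as ``$X^*$ is \emph{a} solution''). Once those are pinned down, the equivalence follows immediately from the monotone exchange step above.
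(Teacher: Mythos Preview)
Your proposal is correct and is essentially the same exchange/contradiction argument the paper uses; the paper's own proof is the one-liner ``if not, then the optimal solution to \eqref{opt:pen-to-bound} would be a better solution to \eqref{opt:ML-RDPG} than $X^*$,'' which is exactly your optimality step. Your additional remarks on the $i,j$ versus $i\neq j$ bookkeeping and the constant $C$ are valid caveats that the paper simply elides.
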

%\noindent
In the following key lemma, we show that the inequality constraint of Optimization \eqref{opt:pen-to-bound} can be replaced by its second order Taylor approximation.
\begin{lemma}\label{lemma2}
For any $\epsilon>0$ and $\gamma\geq 1,$ there exists $\delta>0$ such that for any graph whose ML solution $X^*$ satisfies
\bal\label{eq:hypothesis}
h^*\leq\delta\quad\text{and}\quad\max_i X_{ii}^*\leq \ff{\gamma}{n}\sum_i X_{ii}^*,
\end{align}
the following bound is satisfied. Let $B$ be the mean centered adjacency matrix of the chosen graph. Let $s^*\in {\mathbb R}$ be the optimal value of the following optimization, obtained at $X=X^*$:
\bal\label{opt1}
\max_X \quad & Tr(BX),\\
&X=VV^T,\quad V\in \mathbb{R}^{n\times d} \nonumber\\
&\ff{1}{n(n-1)}\sum_{i\neq j} f(X_{ij})\leq h^*,\nonumber
\end{align}
%\noindent
Let $\tilde{s}$ be the optimal value of the following optimization:
\bal\label{opt:taylor}
\max_X \quad & Tr(BX),\\
&X=VV^T,\quad V\in \mathbb{R}^{n\times d} \nonumber\\
&\ff{a_2}{n(n-1)}\sum_{i\neq j} X_{ij}^2\leq h^*. \nonumber
\end{align}
where $a_2:=f''(-\mu)$. Then
$$\tilde{s} \geq (1-\epsilon)s^*.$$
\end{lemma}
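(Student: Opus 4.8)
The plan is to construct a feasible point of the Taylor-relaxed problem \eqref{opt:taylor} directly from the optimizer $X^*$ of Optimization \eqref{opt1}, by shrinking it by a factor close to $1$. First note that $s^*\ge 0$, since $X=0$ is feasible for \eqref{opt1} and gives $Tr(B\cdot 0)=0$. Fix $\epsilon'>0$ small, to be tied to $\epsilon$ at the end. The key reduction is that it suffices to prove
$$\frac{a_2}{n(n-1)}\sum_{i\ne j}(X^*_{ij})^2\;\le\;(1+\epsilon')\,h^*.$$
Given this, put $X':=(1+\epsilon')^{-1/2}X^*$, which is again of the form $VV^T$ with $V\in\mathbb{R}^{n\times d}$ (rescale a factor of $V^*$ by $(1+\epsilon')^{-1/4}$); then the left side shrinks by the factor $(1+\epsilon')^{-1}$, so $X'$ is feasible for \eqref{opt:taylor}, while $Tr(BX')=(1+\epsilon')^{-1/2}s^*\ge(1-\epsilon)s^*$ as soon as $\epsilon'$ is small enough that $(1+\epsilon')^{-1/2}\ge 1-\epsilon$. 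Hence $\tilde{s}\ge(1-\epsilon)s^*$.

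The displayed inequality reduces to a pointwise comparison of $f$ with $a_2x^2$ near the origin. Since $f$ is strictly convex with $f(0)=f'(0)=0$ (immediate from its definition), and $a_2$ is the quadratic coefficient in its Taylor expansion at $0$, we have $f(x)=a_2x^2+o(x^2)$, so there exists $M_0>0$ with $a_2x^2\le(1+\epsilon')f(x)$ for every $|x|\le M_0$. Because $X^*\succeq 0$, we have $|X^*_{ij}|\le\sqrt{X^*_{ii}X^*_{jj}}\le\max_k X^*_{kk}$ for all $i,j$; hence, provided $\max_k X^*_{kk}\le M_0$, summing the pointwise bound over $i\ne j$ and using feasibility of $X^*$ in \eqref{opt1} (namely $\sum_{i\ne j}f(X^*_{ij})\le n(n-1)h^*$) yields exactly the displayed inequality. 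The lemma therefore comes down to choosing $\delta$ (and, implicitly, requiring $n$ above a fixed threshold) so that the hypotheses \eqref{eq:hypothesis} force $\max_k X^*_{kk}\le M_0$.

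I expect this last a priori bound on the diagonal of $X^*$ to be the main obstacle, precisely because $f$ grows only linearly (not quadratically) at infinity, so one cannot control $\sum_{i\ne j}(X^*_{ij})^2$ by $\sum_{i\ne j}f(X^*_{ij})$ until one already knows the entries of $X^*$ are bounded. I would argue it in two stages. The first is a bootstrap that $L:=\max_k X^*_{kk}$ is at most a fixed constant $C_0=C_0(\gamma,d)$: the diagonal-regularity hypothesis gives $Tr(X^*)\ge nL/\gamma$, and since $X^*$ has rank at most $d$, $\sum_{i\ne j}(X^*_{ij})^2\ge\tfrac1d Tr(X^*)^2-\sum_k(X^*_{kk})^2\ge\tfrac1d Tr(X^*)^2-L\,Tr(X^*)$, which for $n\ge 2\gamma d$ is at least $c\,n^2L^2$ for a constant $c=c(\gamma,d)>0$; since moreover $|X^*_{ij}|\le L$, a positive fraction of all off-diagonal pairs must satisfy $|X^*_{ij}|\ge 1$ once $L$ exceeds $C_0$, and each such pair contributes at least $\rho:=\min\{f(1),f(-1)\}>0$ to $\sum_{i\ne j}f(X^*_{ij})\le n(n-1)h^*\le n^2\delta$, which is impossible once $\delta$ is below a fixed threshold. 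The second stage sharpens this once $L\le C_0$: on the compact interval $[-C_0,C_0]$ we have $f(x)\ge\tfrac12 m\,x^2$ with $m:=\min_{|x|\le C_0}f''(x)>0$, so $n^2\delta\ge\sum_{i\ne j}f(X^*_{ij})\ge\tfrac{m}{2}\bigl(\tfrac1d Tr(X^*)^2-L\,Tr(X^*)\bigr)$, and combining with $L\le\gamma\,Tr(X^*)/n$ yields $Tr(X^*)=O\bigl(n\sqrt{d\delta/m}\bigr)$ and hence $L=O\bigl(\gamma\sqrt{d\delta/m}\bigr)$, which is $\le M_0$ for $\delta$ small. The degenerate situations that remain — $Tr(X^*)$ already $O(1)$, and very small $n$ — are handled separately, the former using only that $n$ is large, the latter (if one wants the statement for all $n$) by appealing to optimality of $X^*$ rather than just feasibility; taking $\delta$ to be the minimum of the finitely many thresholds produced above finishes the argument.
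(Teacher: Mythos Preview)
Your approach is correct and shares the paper's core idea: exploit linearity of $Tr(BX)$ under scaling, together with the Taylor approximation $f(x)\approx a_2x^2$ near zero, to pass from the exact constraint to the quadratic one. The paper phrases this as comparing, along each ray from the origin, the radius $r$ at which the $f$-constraint is hit with the radius $r'=\sqrt{h^*/a_2}$ at which the Frobenius constraint is hit, and shows $r/r'\to1$ uniformly; you instead rescale the single point $X^*$ by $(1+\epsilon')^{-1/2}$ to land in the Taylor feasible set. These are equivalent maneuvers, and only the one-sided inequality $\tilde s\ge(1-\epsilon)s^*$ is needed, which you correctly isolate.

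Where your write-up is actually more complete than the paper's is the a~priori bound $\max_k X_{kk}^*\le M_0$. The paper asserts in one line that the diagonal-regularity hypothesis lets one ``choose $\delta$ such that every entry $X_{ij}$ falls within this neighborhood,'' without explaining why $h^*\le\delta$ forces $Tr(X^*)/n$ small. Your two-stage bootstrap (first rule out large $L$ via a counting argument on entries with $|X^*_{ij}|\ge1$, then on the compact range use $f(x)\ge\tfrac{m}{2}x^2$ together with the rank-$d$ bound $\|X^*\|_F^2\ge Tr(X^*)^2/d$) fills this gap and is sound; your constants depend on $d$ as well as $\gamma$, which is consistent with the paper's setting where $d$ is fixed. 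The residual hand-waving for very small $n$ is harmless: for the finitely many graphs on $n<2\gamma d$ vertices satisfying the diagonal-regularity hypothesis, either $h^*>0$ and you may take $\delta$ below the minimum of these values (making the hypothesis vacuous there), or $h^*=0$, which forces all off-diagonal entries of $X^*$ to vanish and hence $s^*=Tr(BX^*)\le0=\tilde s$.
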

%\noindent
The parameter $h^*$ is related to the average length of latent-position vectors $(X_{ii}^*)$. If these lengths approach zero, $h^*$ approaches zero, for a fixed $\gamma$. An implication of this constraint is that the logistic RDPG must be approximately centered. Thus, there is a natural choice for the parameter $\mu$ for the purpose of inference:
\bal
\hat{\mu}=-l\ii(\ff{1}{n(n-1)}\|A\|_F).
\end{align}
This estimator of $\mu$ can be viewed as the maximum-likelihood estimator specifically over the centered logistic RDPG. With this choice of $\mu$, $B$, the mean-centered adjacency matrix, can be written as $$B=A-\ff{1}{n(n-1)}\|A\|_F.$$

%\noindent
Note that changing the constant in the inequality constraint of Optimization \eqref{opt:taylor} only changes the scale of the solution, since the shape of the feasible set does not change. Thus, in this optimization we avoid needing to know $h^*$ {\it a priori} (as long as the conditions of Lemma \ref{lemma2} are satisfied).

%\noindent
Next we show that that the solution to Optimization \eqref{opt:taylor} can be recovered up to a linear transformation using spectral decomposition:
\begin{lemma}\label{lemma3}
Let $\tilde{X}$ be the optimal solution to Optimization \eqref{opt:taylor}. Let $e_1$,..., $e_d$ be the first $d$ eigenvectors of $B$, corresponding to the largest eigenvalues. Then $e_1,..,e_d$ are identical to the non-null eigenvectors of $\tilde{X}$, up to rotation.
\end{lemma}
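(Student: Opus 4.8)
The plan is to pass to the eigendecomposition of the decision variable. Since $X=VV^T$ with $V\in\mathbb{R}^{n\times d}$ parametrizes exactly the psd matrices of rank at most $d$, I would write $X=\sum_{k=1}^{d}\sigma_k u_k u_k^T$ with $\sigma_k\ge 0$ and $u_1,\dots,u_d$ orthonormal. The objective then becomes $Tr(BX)=\sum_{k}\sigma_k\,\rho_k$ with $\rho_k:=u_k^T B u_k$, and the constraint of Optimization~\eqref{opt:taylor} is a quadratic form in $\sigma=(\sigma_1,\dots,\sigma_d)$: using $\|X\|_F^2=\sum_k\sigma_k^2$ and $\sum_i X_{ii}^2=\sigma^T M\sigma$ with $M_{k\ell}:=\sum_i u_{ki}^2 u_{\ell i}^2$, it reads $\sigma^T(I-M)\sigma\le c$ for the appropriate constant $c$. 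Here $M\succeq 0$ and its row sums are at most $1$, so $I-M\succeq 0$ and the feasible set in $\sigma$ is the intersection of a (possibly flat) ellipsoid with the non-negative orthant.

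Step two is the inner maximization over $\sigma\ge 0$ for a fixed orthonormal frame $\{u_k\}$. This is a linear objective over that convex set, and its optimum assigns positive weight only to indices with $\rho_k>0$, which is exactly the mechanism that makes $\tilde X$ supported on the ``non-null'' directions named in the lemma. Carrying out this maximization collapses the objective to an explicit increasing function of the positive parts $\rho_k^+$; in the idealized case $M=0$ it equals $\sqrt{c\sum_k(\rho_k^+)^2}$.

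Step three, the crux, is to maximize the reduced objective over orthonormal frames. Writing $\rho_k=u_k^T B u_k=\sum_i\lambda_i c_{ki}$, where $\lambda_i$ and $w_i$ are the eigenvalues and eigenvectors of $B$ and $c_{ki}=(u_k\cdot w_i)^2$, the array $(c_{ki})$ has all row sums equal to $1$ and all column sums at most $1$. Combining this sub-stochasticity with convexity of $x\mapsto(x^+)^2$ and a Ky Fan / majorization argument (Jensen on each row, then a budget argument distributing total mass $d$ over the column weights) gives $\sum_k(\rho_k^+)^2\le\sum_{k=1}^d(\lambda_k^+)^2$, with equality precisely when each $u_k$ is an eigenvector of $B$ for the eigenvalue $\lambda_k$. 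Hence the maximizing frame consists of the top $d$ eigenvectors $e_1,\dots,e_d$ of $B$, up to rotation within any repeated eigenspace and within the directions that receive zero weight; this last rotational freedom also accounts for the non-uniqueness of $V$ given $VV^T$, and it yields the statement of Lemma~\ref{lemma3}.

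I expect the main obstacle to be the discrepancy between $\sum_{i\ne j}X_{ij}^2$ in Optimization~\eqref{opt:taylor} and the full $\|X\|_F^2$: because of the term $\sigma^T M\sigma$, the inner problem of Step two is not rotationally symmetric, and $M$ itself depends on the frame $\{u_k\}$, so Steps two and three do not cleanly decouple as in the $M=0$ case. I would control this by noting that the diagonal entries $X_{ii}=\|v_i\|^2$ are the squared latent-position magnitudes, which the hypotheses inherited from Lemma~\ref{lemma2} drive to zero; hence $M\to 0$ and the clean argument becomes exact in the limit. Making this quantitative --- bounding the perturbation of the optimal frame caused by a small but nonzero $M$ --- is the step that needs the most care, and it is where the asymptotic qualifier in the paper's claims enters.
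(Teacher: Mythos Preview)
Your approach is correct in spirit but substantially more elaborate than the paper's. The paper does not perform a two-stage optimization (inner over the spectrum $\sigma$, outer over the frame $\{u_k\}$). Instead, it simply \emph{fixes} the eigenvalues of the optimizer $\tilde X$ at their (unknown) optimal values $\lambda_1^*,\dots,\lambda_d^*$ and observes that, once the eigenvalues are fixed, the Frobenius-type constraint in Optimization~\eqref{opt:taylor} is a constant and can be dropped. The problem then reduces to maximizing $Tr(BX)=\sum_{i=1}^d \lambda_i^*\, e_i^T B e_i$ over orthonormal $e_1,\dots,e_d$, which is the standard Rayleigh/von Neumann trace inequality and immediately gives the top-$d$ eigenvectors of $B$. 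This bypasses your inner maximization entirely and avoids the majorization inequality $\sum_k(\rho_k^+)^2\le\sum_{k=1}^d(\lambda_k^+)^2$, which---while provable along the Jensen-plus-budget lines you sketch---is extra work you do not need.

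On the diagonal issue: you are right that the constraint in \eqref{opt:taylor} is $\sum_{i\ne j}X_{ij}^2$, not $\|X\|_F^2$, and that the difference $\sum_i X_{ii}^2=\sigma^T M\sigma$ depends on the frame, so it is not purely a spectral quantity. The paper's proof silently replaces the off-diagonal sum with $\|X\|_F^2$ (``the Frobenius norm of a matrix is determined by eigenvalues\dots thus we can drop the Frobenius norm constraint'') and treats Lemma~\ref{lemma3} as an exact statement. Your proposal to control this discrepancy asymptotically via $M\to 0$ is more scrupulous, but it would downgrade the lemma from exact to asymptotic---which the paper does not do. So here you are being more careful than the source, at the cost of complicating an argument the authors intended to be a two-line reduction; if you want to match the paper, simply use $\|X\|_F^2$ in place of $\sum_{i\ne j}X_{ij}^2$ and note (or ignore, as they do) that the diagonal correction is of lower order.
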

%\noindent
Once the eigenvectors of $\tilde{X}$ are known, it remains only to recover the corresponding eigenvalues. Instead of recovering the eigenvalues of $\tilde{X}$, we find the eigenvalues that maximize the likelihood, given the eigenvectors of $\tilde{X}$. Let $\tilde{X}_i=e_ie_i^T$. Then, the maximum-likelihood estimate of $\lambda_1,...,\lambda_d$ conditional on $X_1,...,X_d=\tilde{X}_1,...,\tilde{X}_d$ can be written as follows:
\bal\label{opt:lambda}
 \lambda^*:=\text{argmax}_{\lambda=(\lambda_1,...,\lambda_d)} \sum_{i\neq j} \log P(A_{ij}|\tilde{X}_1,...,\tilde{X}_d, \ll,\mu).
\end{align}
\begin{lemma}\label{lemma4}
Optimization \eqref{opt:lambda} can be solved by logistic regression of the entries of $A$ on the entries of $X_1^*,...,X_d^*$, with the constraint that the coefficients are nonnegative, and with intercept $\mu$.
\end{lemma}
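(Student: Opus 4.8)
The plan is to rewrite the objective of Optimization~\eqref{opt:lambda} in terms of the scaling coefficients $\lambda$ and to recognize it, term for term, as the log-likelihood of a logistic regression. Writing $X=\sum_{k=1}^{d}\lambda_k\tilde{X}_k=\sum_{k=1}^{d}\lambda_k e_ke_k^T$, the edge probability conditional on the fixed eigenvectors is
$$P(A_{ij}=1\mid \tilde{X}_1,\dots,\tilde{X}_d,\lambda,\mu)=l(\eta_{ij}-\mu),\qquad \eta_{ij}:=\sum_{k=1}^{d}\lambda_k(\tilde{X}_k)_{ij},$$
so, $A_{ij}$ being Bernoulli, $\log P(A_{ij}\mid\cdot)=A_{ij}\log l(\eta_{ij}-\mu)+(1-A_{ij})\log\bigl(1-l(\eta_{ij}-\mu)\bigr)$. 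Summing over $i\neq j$ and using $\log\bigl(l(x)/(1-l(x))\bigr)=x$ exactly as in the derivation of Optimization~\eqref{opt:0}, the objective of \eqref{opt:lambda} becomes $\sum_{i\neq j}\bigl[A_{ij}(\eta_{ij}-\mu)+\log(1-l(\eta_{ij}-\mu))\bigr]$. This is precisely the log-likelihood of a logistic regression model whose observations are the pairs $(i,j)$, whose binary responses are the $A_{ij}$, whose design row for pair $(i,j)$ is $\bigl((\tilde{X}_1)_{ij},\dots,(\tilde{X}_d)_{ij}\bigr)$, whose coefficient vector is $\lambda$, and whose intercept enters as a known offset equal to $-\mu$ rather than as a fitted parameter. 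Maximizing this function over $\lambda$ is exactly what a logistic-regression routine computes, so its output is the desired $\lambda^*$.

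I would then verify the two bookkeeping points. First, the feasible set: since $e_1,\dots,e_d$ are orthonormal, $X=\sum_k\lambda_k e_ke_k^T$ has spectrum $\{\lambda_1,\dots,\lambda_d\}$ together with zeros, so $X=VV^T$ is solvable for a real $V\in\mathbb{R}^{n\times d}$ if and only if $X\succeq 0$, i.e. if and only if $\lambda_k\geq 0$ for every $k$; on this set one may take $V=[\sqrt{\lambda_1}e_1\ \cdots\ \sqrt{\lambda_d}e_d]$, the matrix returned by Algorithm~\ref{algorithm}. Hence the RDPG latent-position constraint implicit in \eqref{opt:lambda} is exactly the nonnegativity constraint $\lambda_k\geq 0$ stated in the lemma. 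Second, the symmetry: because $A$ and each $\tilde{X}_k$ are symmetric with vanishing diagonal, running the regression over the $\binom{n}{2}$ above-diagonal pairs instead of all $n(n-1)$ ordered pairs only halves the log-likelihood and therefore leaves the maximizer unchanged, which matches the ``entries of $A$ lying above the diagonal'' in Algorithm~\ref{algorithm}.

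There is no analytic difficulty in this lemma: its content is the structural fact --- available only because the logistic link linearizes $\log(l/(1-l))$ --- that the likelihood, viewed as a function of the scaling coefficients $\lambda$ with the eigenvectors and $\mu$ held fixed, is itself a generalized-linear-model likelihood. The care needed is confined to (a) identifying the feasible sets as above, (b) treating $\mu$ as a fixed offset so that inference stays within the centered logistic RDPG used throughout Section~\ref{subsec:inference-method}, and (c) the harmless factor of two from the undirected double-counting. For completeness I would also note that the logistic-regression log-likelihood is concave in $\eta_{ij}$, hence concave in $\lambda$, and that $\{\lambda\geq 0\}$ is convex, so the constrained problem has a well-defined global optimum returned by the solver.
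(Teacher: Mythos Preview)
Your proposal is correct and follows the same approach as the paper: the paper's own proof simply observes that the upper-triangular entries of $A$ are independent Bernoulli with a logistic link, and that the nonnegativity constraint comes from the positive-semidefiniteness of $X$. You have expanded exactly this argument with considerably more care (the explicit rewriting via $\log(l/(1-l))=x$, the feasible-set identification, the double-counting remark, and the concavity observation), but the underlying idea is identical.
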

%\noindent
\begin{figure*}[t]
\hspace{.9cm}
      \includegraphics[width=.9\textwidth]{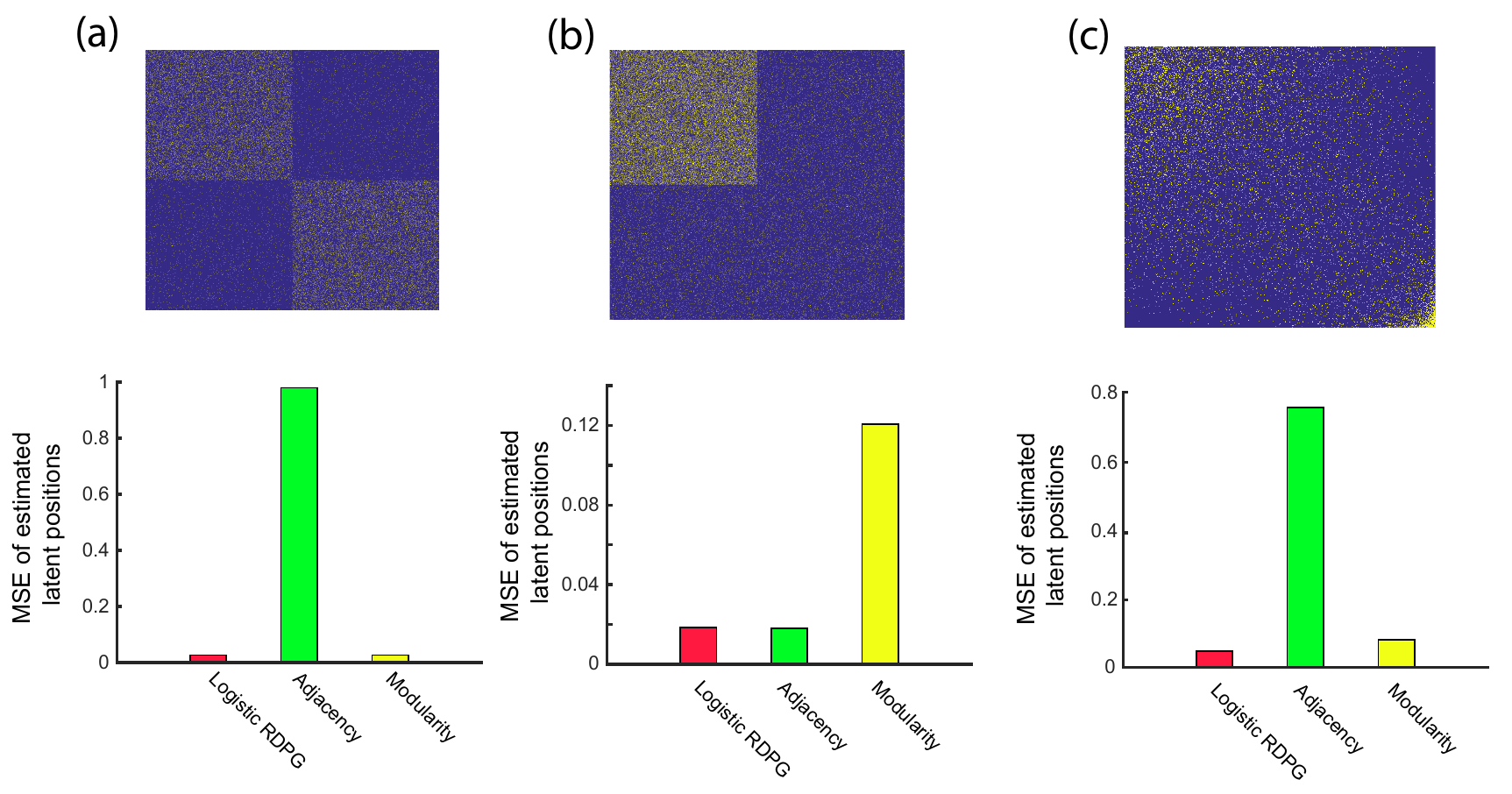}
  \caption{Normalized mean squared error (one minus squared correlation) of inferred latent positions for two SBMs (a-b) and a non-SBM logistic RDPG (c). The top eigenvectors of the adjacency matrix $A$ and the modularity matrix $M$ do not characterize the community structure in panel (a) and in panel (b), respectively. Note that in practice, a different eigenvector could be selected or multiple eigenvectors could be used. In panel (c), the top eigenvector of $A$ does not recover the latent structure. In contrast, our method successfully recovers the underlying latent position vectors in all cases.}
  \label{fig:latentpositions}
\end{figure*}

\begin{figure*}[t]
\includegraphics[width=0.8\textwidth]{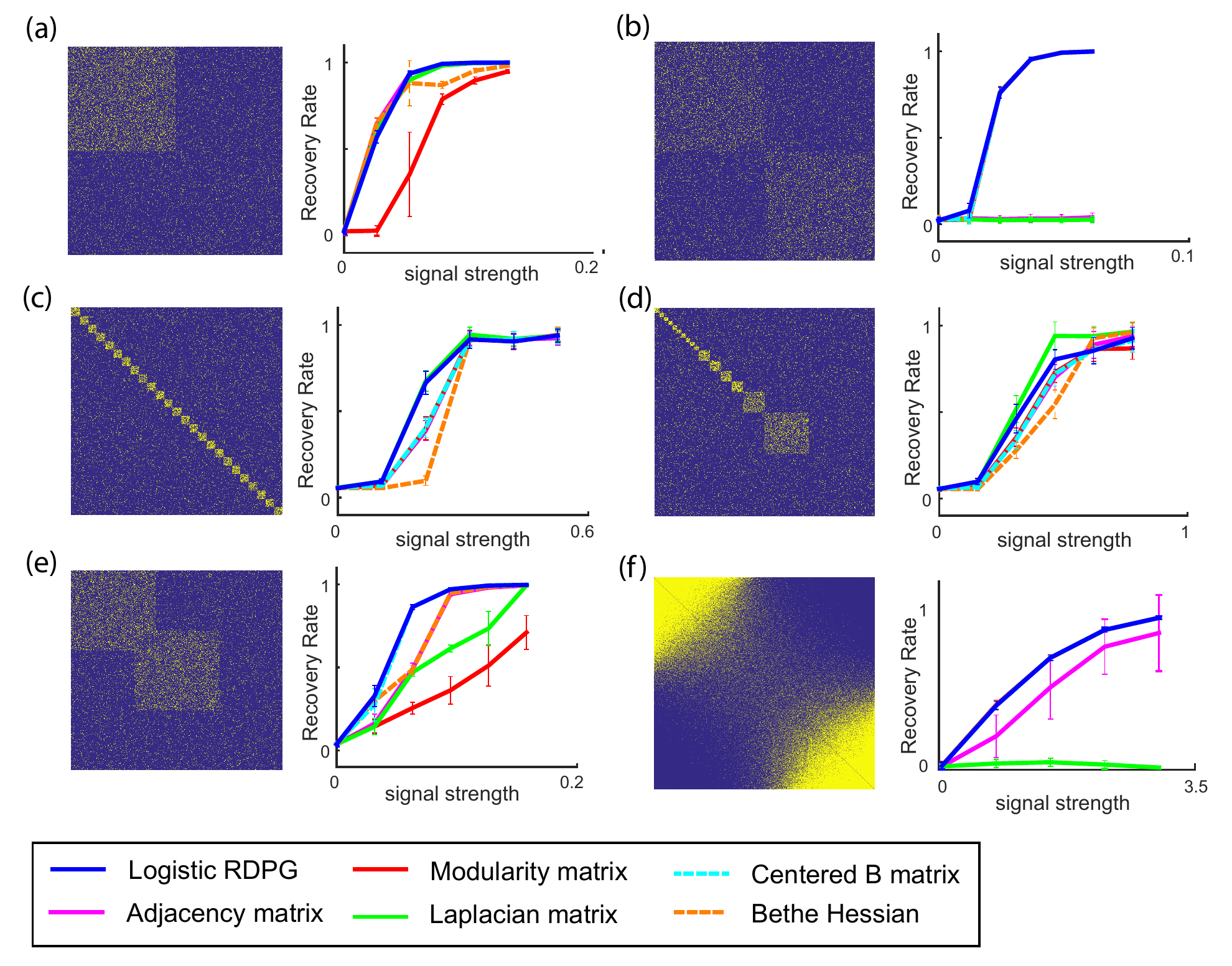}
\centering
\caption{Performance comparison of our method (logistic RDPG) against spectral clustering methods in different clustering setups. Panels (a)-(e) illustrate networks that can be characterized by SBM, while panel (f) illustrates a non-SBM network model. The scale of the $x$ axis is different in panel (f) than the rest of the panels. Our proposed method performs consistently well, while other methods exhibit sensitive and inconsistent performance in different network clustering setups. Note that in some cases, such as for the Laplacian in panel (b), performance is improved by using a different eigenvector or by using a larger number of eigenvectors.}\label{fig:simulations-a}
\end{figure*}

\begin{figure*}[t]
\includegraphics[width=.7\textwidth]{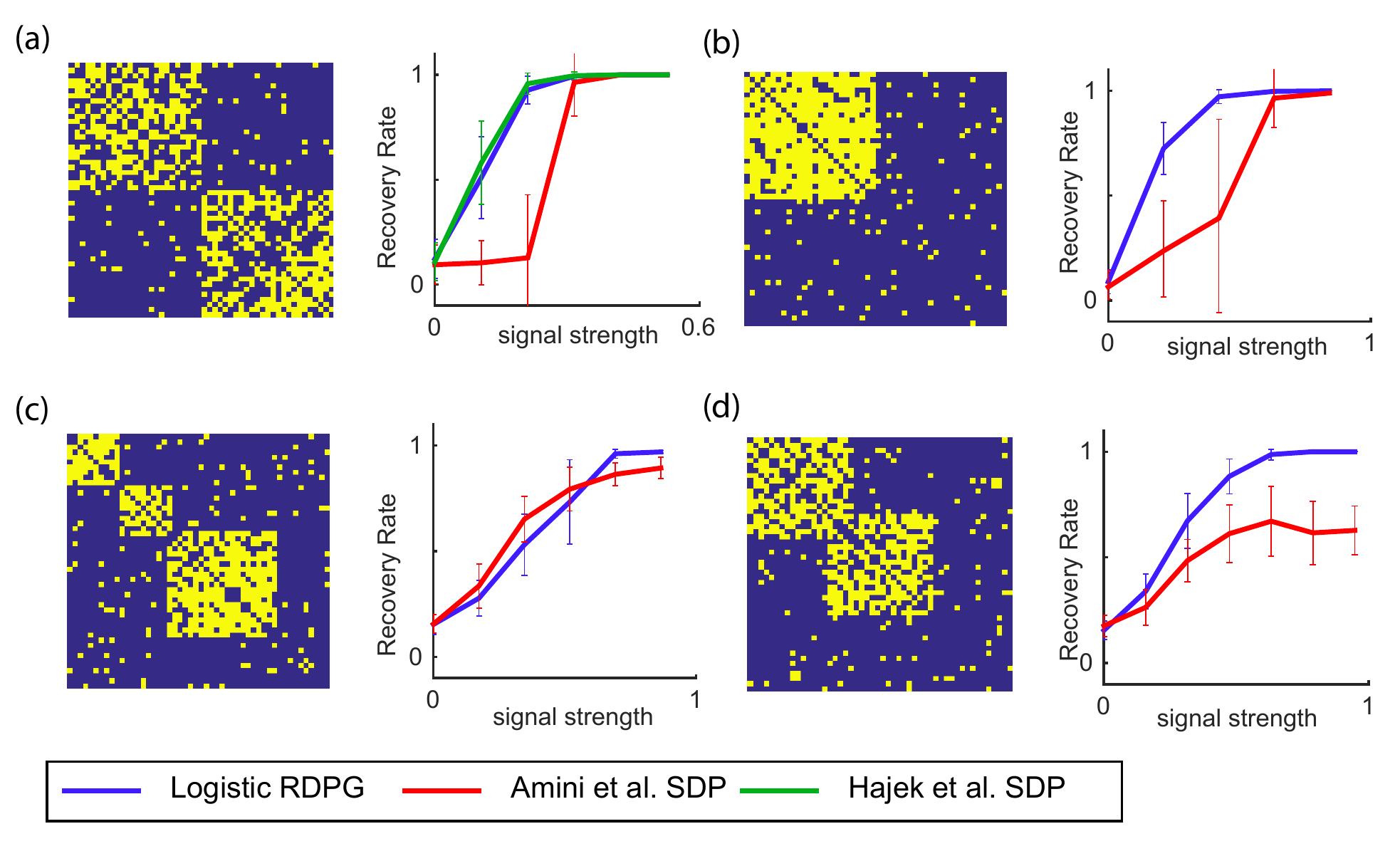}
\centering
\caption{Performance comparison of our method (logistic RDPG) against semidefinite programming-based clustering methods. Hajek et al.'s method is designed for the case of having two equally sizes partitions, thus it is not included in Panels b-d.}\label{fig:SDP-performance}
\end{figure*}

These lemmas can be used to show the asymptotic optimality of Algorithm \ref{algorithm}:
\begin{theorem}\label{thm:ml}
For all $\epsilon>0$ and $\gamma>1$, there exists $\delta>0$ that satisfies the following. For any graph with size $n$ and adjacency matrix $A$, suppose that $X^*$ is the solution to the optimization
\bal
\max_X\quad& P(A|X),\nonumber\\
&X=VV^T,\quad V\in \mathbb{R}^{n\times d}. \nonumber
\end{align}
Let $$h^*:=\ff{1}{n(n-1)}\sum_{i\neq j}f(X_{ij}^*).$$
If
\bal\label{eq:hypothesis}
h^*<\delta\quad\text{and}\quad\max_i X_{ii}^*\leq \ff{\gamma}{n}\sum_i X_{ii}^*,
\end{align}
then
$$\ff{P(A|X=X_*)}{P(A|X=X^*)}>1-\epsilon,$$
where $X_*$ is the solution obtained by Algorithm \ref{algorithm}.
\end{theorem}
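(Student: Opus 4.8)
The plan is to chain Lemmas~\ref{lemma1}--\ref{lemma4} and then convert the multiplicative guarantee of Lemma~\ref{lemma2} into a bound on the likelihood ratio. Write $F(X)$ for the objective of~\eqref{opt:ML-RDPG}, which equals $2\log P(A|X)$; the computation preceding Lemma~\ref{lemma1} shows $F(X)=Tr(BX)-\sum_{i\neq j}f(X_{ij})+C$ with $C$ independent of $X$, so $F(X^*)=s^*-n(n-1)h^*+C$ where $s^*:=Tr(BX^*)$. Since $F$ is maximized at $X^*$ while $X=0$ is feasible with $F(0)=C$, we also record $s^*\ge n(n-1)h^*\ge 0$.

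By Lemma~\ref{lemma1}, $X^*$ is optimal for Optimization~\eqref{opt:pen-to-bound} (maximize $Tr(BX)$ subject to $X=VV^T$, $V\in\mathbb{R}^{n\times d}$ and $\frac{1}{n(n-1)}\sum_{i\neq j}f(X_{ij})\le h^*$), so $s^*$ is its optimal value. Fix a tolerance $\epsilon_1>0$, to be pinned down at the end, and apply Lemma~\ref{lemma2} with $(\epsilon_1,\gamma)$ to obtain $\delta_1>0$: provided $h^*<\delta_1$ and the degree-regularity hypothesis~\eqref{eq:hypothesis} holds, the optimal value $\tilde s$ of the quadratically-constrained Optimization~\eqref{opt:taylor} satisfies $\tilde s\ge(1-\epsilon_1)s^*$. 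Let $\tilde X$ attain it. It is positive semidefinite of rank at most $d$, and by Lemma~\ref{lemma3} it is diagonalized by the top $d$ eigenvectors $e_1,\dots,e_d$ of $B$, so $\tilde X=\sum_{i=1}^d\tilde\lambda_i\,e_ie_i^T$ with each $\tilde\lambda_i\ge 0$. Hence $(\tilde\lambda_1,\dots,\tilde\lambda_d)$ is a feasible point for the nonnegative logistic regression that Algorithm~\ref{algorithm} solves, and by Lemma~\ref{lemma4} the output $X_*=\sum_{i=1}^d\lambda_i^*\,e_ie_i^T$ of Algorithm~\ref{algorithm} satisfies $P(A|X_*)\ge P(A|\tilde X)$, i.e.\ $F(X_*)\ge F(\tilde X)$.

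It remains to bound $F(X^*)-F(\tilde X)$. Using the formula for $F$ at $X^*$ and at $\tilde X$,
\bal
F(X^*)-F(\tilde X)=(s^*-\tilde s)+\left(\sum_{i\neq j}f(\tilde X_{ij})-n(n-1)h^*\right)\le\epsilon_1 s^*+\left(\sum_{i\neq j}f(\tilde X_{ij})-n(n-1)h^*\right).
\end{align}
Since $f$ is convex with $f(0)=f'(0)=0$ and $f'''$ bounded, the quadratic budget in~\eqref{opt:taylor} controls the true penalty at $\tilde X$ up to lower-order terms --- this is precisely the sense in which~\eqref{opt:taylor} is the second-order Taylor approximation of~\eqref{opt:pen-to-bound} --- so $\sum_{i\neq j}f(\tilde X_{ij})\le\left(1+c\max_{i\neq j}|\tilde X_{ij}|\right)n(n-1)h^*$ for an absolute constant $c$, and the bracketed term above is at most $c\,n(n-1)h^*\max_{i\neq j}|\tilde X_{ij}|\le c\,s^*\max_{i\neq j}|\tilde X_{ij}|$, using $n(n-1)h^*\le s^*$. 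Therefore $F(X^*)-F(X_*)\le F(X^*)-F(\tilde X)\le\left(\epsilon_1+c\max_{i\neq j}|\tilde X_{ij}|\right)s^*$. One now verifies, possibly after shrinking $\delta\le\delta_1$, that both factors are controlled by the hypotheses: the quadratic budget in~\eqref{opt:taylor} together with the delocalization and degree-regularity of the $e_i$ gives $\max_{i\neq j}|\tilde X_{ij}|\le\frac{C'}{n}\sum_i\tilde\lambda_i$ with $\sum_i\tilde\lambda_i^2=O(n^2h^*)$, which tends to $0$ as $\delta\to0$; and smallness of $h^*$ forces the maximum-likelihood model to be close in likelihood to the Erd\H{o}s--R\'enyi model of the same density, which with~\eqref{eq:hypothesis} keeps $Tr(X^*)$ and the leading eigenvalues of $B$ --- and hence $s^*$ --- bounded. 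Choosing $\epsilon_1$ small and then $\delta$ small enough that $\left(\epsilon_1+c\max_{i\neq j}|\tilde X_{ij}|\right)s^*<-2\log(1-\epsilon)$ yields $F(X_*)-F(X^*)>2\log(1-\epsilon)$, i.e.\ $P(A|X_*)/P(A|X^*)>1-\epsilon$.

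The steps through Lemmas~\ref{lemma1}--\ref{lemma4} are essentially bookkeeping; the substantive obstacle is the last one, upgrading Lemma~\ref{lemma2}'s relative bound $\tilde s\ge(1-\epsilon_1)s^*$ to a statement about likelihoods. This requires showing, from $h^*<\delta$ and~\eqref{eq:hypothesis} alone, that the ``linear budget'' $s^*$ (equivalently the total penalty $n(n-1)h^*\le s^*$) stays bounded and that the largest entry of the relaxed optimum $\tilde X$ vanishes with $\delta$, so that Lemma~\ref{lemma2}'s tolerance and the cubic Taylor remainder can be pushed below the target after dividing out $s^*$. The convexity of $f$ and the Frobenius-norm form of the relaxed constraint are exactly what make these estimates go through.
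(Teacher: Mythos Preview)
Your approach is essentially the same as the paper's: chain Lemmas~\ref{lemma1}--\ref{lemma4} and then upgrade the multiplicative guarantee $\tilde s\ge(1-\epsilon_1)s^*$ of Lemma~\ref{lemma2} into a likelihood-ratio bound. The paper's proof is much shorter and less careful: it simply asserts that since $s^*$ and $\tilde s$ ``do not diverge to $\pm\infty$'', the convergence $s^*/\tilde s\to 1$ forces $s^*-\tilde s\to 0$, identifies this directly with the log-likelihood difference, and then notes that re-optimizing the eigenvalues by logistic regression (Lemma~\ref{lemma4}) can only improve the likelihood over $\tilde X$.

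Your bookkeeping is more honest than the paper's in two places. First, you correctly decompose $F(X^*)-F(\tilde X)$ as $(s^*-\tilde s)$ plus the penalty gap $\sum_{i\neq j}f(\tilde X_{ij})-n(n-1)h^*$; the paper effectively treats the second piece as zero without comment. Second, you explicitly flag the boundedness of $s^*$ (uniformly over all graphs satisfying the hypotheses) as the substantive obstacle; the paper asserts this without argument. That said, your control of the penalty gap via $\max_{i\neq j}|\tilde X_{ij}|$ and ``delocalization and degree-regularity of the $e_i$'' is not supported by anything in the paper --- the $e_i$ are eigenvectors of the mean-centered adjacency matrix of an \emph{arbitrary} graph, so no delocalization is available a priori --- making this step a genuine gap in your write-up. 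The paper's own proof, however, offers no better justification at the corresponding point; it simply does not engage with the penalty comparison or the uniformity of the bound on $s^*$.
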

%\noindent

Our algorithm is asymptotically exact in the sense that the likelihood ratio between our solution and the true maximum converges uniformly to one as the average latent position length shrinks. Importantly, the convergence is uniform over arbitrarily large graphs; therefore, this regime contains most interesting large network models, such as an SBM with large communities that cannot be perfectly recovered. Coupling this algorithm with a $k$-means post processing step leads to a clustering method with robust performance under different network clustering setups.

This result is stronger than the statement that an approximate objective function approximates the likelihood function at the optimum of the likelihood function. Such a result, which can be obtained for many link functions (such as an affine-linear link), is not useful because it does not follow that the optimum of the approximate function lies near the optimum of the true likelihood function. Indeed, for a linear approximation, it has no optimum since the objective function is unbounded. In order to obtain the stronger statement that the likelihood at the optimum of the approximation is large, it is necessary to use a quadratic approximation. For link functions besides the logistic link, the quadratic term in the likelihood function depends on $A$, and a spectral optimization method cannot be obtained.

The condition in Theorem \ref{thm:ml} that the lengths of optimal latent vectors are sufficiently small is not restrictive for large networks. Consider a sequence of increasingly-large SBMs with two clusters of fixed relative sizes, and a convergent sequence of admissible connectivity matrices whose  average density is fixed. There are three asymptotic regimes for the community structure: (1) in which the structure of the network is too weak to detect any clusters at all; (2) in which the communities can be partially recovered, but some mis-assignments will be made; and (3) in which the communities can be recovered perfectly. The true latent position lengths go to zero in regimes (1) and (2) as well as in part of regime (3) \cite{hajek}. Theorem \ref{thm:ml} requires maximum-likelihood latent position lengths, rather than true position lengths, to go to zero. If this is the case, and if maximum likelihood achieves the optimum thresholds for partial recovery and perfect recovery, then our method will as well.

\section{Performance Evaluation over Synthetic Networks}\label{sec:validation}
In this section, we compare the performance of our proposed method (Algorithm \ref{algorithm}) with existing methods. First, we assess the performance of our algorithm against existing methods in inference of latent position vectors of two standard SBMs depicted in Figure \ref{fig:latentpositions}. The network demonstrated in panel (a) has two dense clusters. In this case, the first eigenvector of the modularity matrix $M$ leads to a good estimation of the latent position vector while the first eigenvector of the adjacency matrix $A$ fails to characterize this vector. This is because the first eigenvector of the adjacency matrix correlates with node degrees. The modularity transformation regresses out the degree component and recovers the community structure. However, the top eigenvector of the modularity matrix fails to identify the underlying latent position vector when there is a single dense cluster in the network, and the community structure {\it is} correlated with node degrees (Figure \ref{fig:latentpositions}-b). This discrepancy highlights the sensitivity of existing heuristic inference methods in different network models (the Modularity method has not previously been considered a latent-position inference method, but we believe that its appropriate to do so). In contrast, our simple normalization allows the underlying latent position vectors to be accurately recovered in both cases. We also verified in panel (c) that our method successfully recovers latent positions for a non-SBM logistic RDPG. In this setup, the adjacency matrix's first eigenvector again correlates with node degrees, and the modularity normalization causes an improvement. We found it remarkable that such a simple normalization (mean centering) enabled such significant improvements; using more sophisticated normalizations such as the Normalized Laplacian and the Bethe Hessian, no improvements over were observed (data not shown).

Second, we assessed the ability of our method to detect communities generated from the SBM. We compared against the following existing spectral network clustering methods:
\begin{itemize}
\item Modularity (Newman, 2006). We take the first $d$ eigenvectors of the modularity matrix $M:=A-v^Tv/2|\cE|$, where $v$ is the vector of node degrees and $|\cE|$ is the number of edges in the network. We then perform $k$-means clustering on these eigenvectors.
\item Normalized Laplacian (Chung, 1997). We take second- through $(d+1)$st- last eigenvectors of $L_{sym}:=D^{-1/2}(D-A)D^{-1/2}$, where $D$ is the diagonal matrix of degrees. We then perform $k$-means clustering on these eigenvectors.
\item Bethe Hessian (Saade et al., 2014). We take the second- through $(d+1)$st- last eigenvectors of $$H(r):=(r^2-1){\bf 1}_{n\times n}-rA+D,$$ where $r^2$ is the density of the graph as defined in \cite{saade}.
\item Unnormalized spectral clustering (Sussman et al., 2012). We take the first $d$ eigenvectors of the adjacency matrix $A$, and perform $k$-means clustering on these eigenvectors.
\item Spectral clustering on the mean-centered matrix $B$. We take the first $d$ eigenvectors of the matrix $B$ and perform $k$-means on them, without a scaling step.
\end{itemize}
Note that in our evaluation we include spectral clustering on the mean-centered adjacency matrix $B$ without subsequent eigenvalue scaling of Algorithm \ref{algorithm} to demonstrate that the scaling step computed by logistic regression is essential to the performance of the proposed algorithm. When $d=1$, the methods are equivalent. We also compare the performance of our method against two SDP-based approaches, the method proposed by Hajek et al. (2015) and the SDP-1 method proposed by Amini et al. (2014). For all methods we assume that the number of clusters $k$ is given.

%\noindent
In our scoring metric, we distinguish between clusters and communities: For instance, in Figure \ref{fig:simulations-a}-e, there are two clusters and four communities, comprised of nodes belonging only to cluster one, nodes belonging only to cluster two, nodes belonging to both clusters, and nodes belonging to neither. The score that we use is a normalized Jaccard index, defined as:
\bal
 \ff{\max_{\sigma\in S_k}\sum_{l=1}^k \ff{|C_l\cap\hat{C}_{\sigma(l)}|}{|C_l|} - 1}{k-1}
\end{align}
where $C_l$ is the $l$-th community, $\hat{C}_l$ is the $l$-th estimated community, and $S_k$ is the group of permutations of $k$ elements. Note that one advantage of using this scoring metric is that it weighs differently-sized clusters equally (it does not place higher weights on larger communities.).

%\noindent
Figure \ref{fig:simulations-a} presents a comparison between our proposed method and existing spectral methods in a wide range of clustering setups. Our proposed method performs consistently well, while other methods exhibit sensitive and inconsistent performance in different network clustering setups. For instance, in the case of two large clusters (b) the second-to-last eigenvector of the Normalized Laplacian fails to correlate with the community structure; in the case of having one dense cluster (a), the Modularity normalization performs poorly; when there are many small clusters (c), the performance of the Bethe Hessian method is poor. In each case, the proposed method performs at least as well as the best alternate method, except in the case of several different-sized clusters (d), when the normalized Laplacian performs marginally better. In the case of overlapping clusters (e), our method performs significantly better than all competing methods. Spectral clustering on $B$ without the scaling step also performs well in this setup; however, its performance is worse in panels (c-d) when $d$ is larger, highlighting the importance of our logistic regression step.

The values of $k$ and $d$ for the different simulations were: $k=2,d=1$; $k=2,d=1$; $k=25,d=24$; $k=17,d=16$; $k=4,d=2$; $k=2,d=1$ for panels (a)-(f), respectively. The values of $d$ are chosen based on the number of dimensions that would be informative to the community structure, if one knew the true latent positions. All networks have 1000 nodes, with background density $0.05$.

While spectral methods are most prominent network clustering methods owing to their accuracy and efficiency, other approaches have been proposed, notably including SDP-based methods, which solve relaxations of the maximum likelihood problem over the SBM. We compare the performance of our method with the SDP-based methods proposed by Hajek et al. (2015) and Amini et al. (2014) (Figure \ref{fig:SDP-performance}). In the symmetric SBM, meaning the SBM with two equally-dense, equally-large communities, we find that our method performs almost equally well as the method of Hajek et al. (2015), which is a simple semidefinite relaxation of the likelihood in that particular case. Our method also performs better than the method of Amini et al., which solves a more complicated relaxation of the SBM maximum-likelihood problem in the more general case (Figure \ref{fig:SDP-performance}).

\section{Performance Evaluation over Real Networks}\label{sec:real}
To assess the performance of the Logistic RDPG over well-characterized real networks, we apply it to two well-known real networks. First, we consider the  Karate Club network \cite{karate}. Originally a single karate club with social ties between various members, the club split into two clubs after a dispute between the instructor and the president. The network contains 34 nodes with the average degree 4.6, including two high degree nodes corresponding to the instructor and the president. Applying our method to this network, we find that the first eigenvector separates the two true clusters perfectly (Figure \ref{fig:real}-a).

In the second experiment, we consider a network of political blogs, whose edges correspond to links between blogs \cite{blogs}. This network, which contains 1221 nodes with nonzero degrees, is sparse (the average total degree is 27.4) with a number of high degree nodes (60 nodes with degrees larger than 100). The nodes in this network have been labeled as either {\it liberal} or {\it conservative}. We apply our method to this network. Figure \ref{fig:real}-b shows inferred latent positions of nodes of this network. As it is illustrated in this figure, nodes with different labels have been separated in the latent space. Note that some nodes are placed near the origin, indicating that they cannot be clustered  confidently; this is occurred owing to their low degrees as the correlation between node degrees and distances from the origin was 0.95.

\begin{figure*}[t]
\includegraphics[width=.7\textwidth]{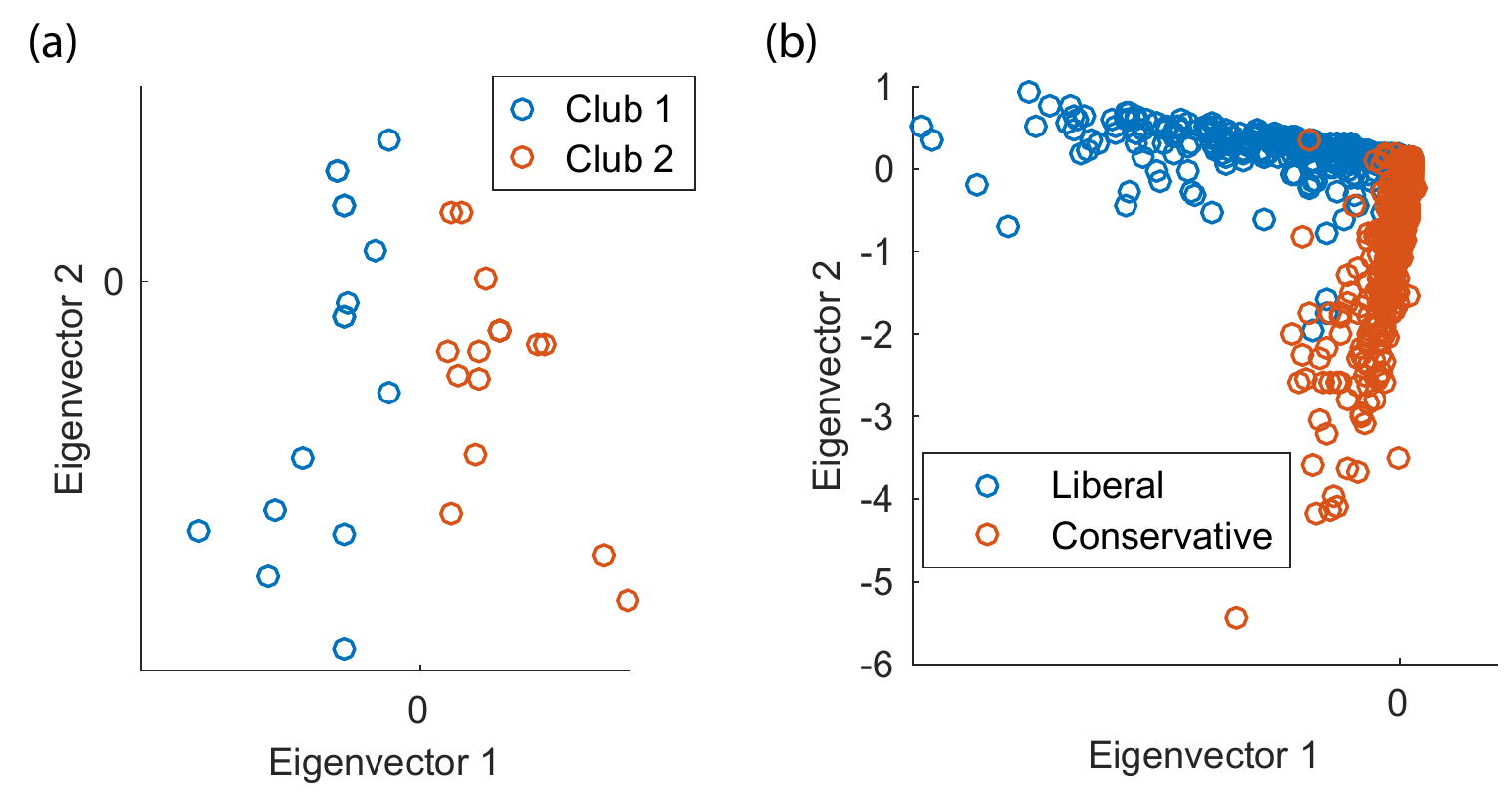}
\centering
\caption{Estimated latent positions for nodes of two real networks. (a) Karate club social network. (b) Political blogs network. }\label{fig:real}
\end{figure*}

\section{Code}
We provide code for the proposed method in the following link:
\textcolor{blue}{https://github.com/SoheilFeizi/spectral-graph-clustering}

%%%%%%%%%%%%%%

\section{Discussion}\label{sec:discussion}
In this paper, we developed a spectral inference method over logistic Random Dot Product Graphs (RDPGs), and
we showed that the proposed method is asymptotically equivalent to the maximum likelihood latent-position inference. Previous justifications for spectral clustering have usually been either consistency results \cite{sussman, rohe} or partial-recovery results \cite{krzakala,nadakuditi}; to the best of our knowledge, our likelihood-based justification is the first of its kind for a spectral method. This type of justification is satisfying because maximum likelihood inference methods can generally be expected to have optimal asymptotic performance characteristics; for example, it is known that maximum likelihood estimators are consistent over the SBM \cite{bickel,celisse}. It remains an important future direction to characterize the asymptotic performance of the MLE over the Logistic RDPG.

We have focused in this paper on the network clustering problem; however, latent space models such as the Logistic RDPG can be viewed as a more general tool for exploring and analyzing network structures. They can be used for visualization \cite{hall,koren} and for inference of partial-membership type structures, similar to the mixed-membership stochastic blockmodel \cite{airoldi}. Our approach can also be generalized to multi-edge graphs, in which the number of edges between two nodes is binomially distributed. Such data is emerging in areas including systems biology, in the form of cell type and tissue specific networks \cite{neph}.

\section{Proofs}\label{sec:proofs}

\begin{proof}[Proof of Lemma \ref{lemma1}]
If not, then the optimal solution to Optimization \eqref{opt:pen-to-bound} would be a better solution to Optimization \eqref{opt:ML-RDPG} than $X^*$.
\end{proof}
\begin{proof}[Proof of Lemma \ref{lemma2}]
The origin is in the feasible set for both optimizations. For each optimization, the objective function value satisfies $Tr(rCX)=rTr(CX)$. Thus, the optimum is either at the origin (if there is no positive solution) or at the boundary of the feasible set. If the optimum is at the origin, we have $s^*=\tilde{s}=0$. If not, let $X$ be be any solution to $\ff{1}{n^2}\sum_{i,j} f(X_{ij})= h$. Let $r=||X||_F$, and let $r'=\sqrt{h/a_2}$. Claim: fixing $\gamma$, $r/r'\arr 1$ uniformly as $h^*\arr 0$.

\noindent
Define
$$F_{X}(a):=\ff{1}{n^2}\sum_{i,j} f(aX_{ij}/||X||_F)$$
for $a>0$. In addition, since $r'$ has been defined such that the quadratic term of $F_X(r')$ is $a_2\sum_{i,j}(\ff{r'}{r}X_{ij})^2=h^*$, we have
\bal\label{eq:move}
F_X(r')=h^*+\ff{1}{n^2}O(\sum_{i,j}(\ff{r'}{r}X_{ij})^3).
\end{align}
Moreover, the Taylor series for $f(\cdot)$ converges in a neighborhood of zero. Because of the constraint $$\max_{i,j} X_{ij}^*=\max_i X_{ii}^*\leq \ff{\gamma}{n}\sum_i X_{ii}^*,$$ we can choose $\delta$ such that every entry $X_{ij}$ falls within this neighborhood. This constraint also implies
$$\ff{1}{n^2}\sum X_{ij}^3\leq \ff{1}{n^2}\sum |X_{ij}|^3=O\left(\left(\ff{1}{n^2}||X||_F\right)^3\right).$$
Substituting this into \eqref{eq:move}, we have
\bal
F_X(r')=h^*+\ff{1}{n^2}O(r'^3).
\end{align}
Therefore, we have
\bal
\ff{|F_X(r)-F_X(r')|}{F_X(r')}=\ff{|h^*+O(r'^3)-h^*|}{h^*}=O(r').
\end{align}

\noindent
Note that $f(\cdot)$ is convex function with $f'(x)>0$ for all $x>0$ and $f'(x)<0$ for all $x<0$.
Thus $F_X$ is increasing, convex, and zero-valued at the origin: for any $a,b>0$,
\bal
 \ff{|a-b|}{b}<\ff{|F_X(a)-F_X(b)|}{F_X(b)}.
\end{align}
Thus $\ff{|r-r'|}{r'}=O(r')$ and $\ff{r}{r'}=1+O(r').$

\noindent
Let $r_s$ be the norm of the $\arg\max$ to optimization \eqref{opt1}; because the objective function is linear we have that $\tilde{s}\geq \ff{r'}{r_s}s^*$. Let $r_t$ be the distance to the intersection of the boundary of the feasible set with the ray from the origin through the $\arg\max$ to optimization \eqref{opt:taylor}; then $s^*\geq \ff{r_t}{r'}\tilde{s}$. We have shown that both ratios tend uniformly to one. This completes the proof.
\end{proof}
\begin{proof}[Proof of Lemma \ref{lemma3}]
First, suppose we have prior knowledge of eigenvalues of $X^*$. Denote its nonzero eigenvalues by $\lambda_1^*,...,\lambda_d^*$. Then we would be able to recover the optimal solution to Optimization \eqref{opt:taylor} by solving the following optimization
\bal\label{opt5}
\max_X \quad &Tr(BX) \nonumber\\
&\lambda_i=\lambda_i^*\quad 1\leq i\leq d\nonumber\\
&rank(X)=d
\end{align}

\noindent
Note that the Frobenius norm of a matrix is determined by eigenvalues of the matrix as follows:
\bal\label{norm-redundancy}
||X||_F^2=Tr(XX^T)=Tr(X^2)=\sum \ll_i^2.
\end{align}
Thus we can drop the Frobenius norm constraint in \eqref{opt:taylor}. Let $X$ be an $n\times n$ psd matrix, whose non-null eigenvectors are the columns of a matrix $E\in {\mathbb R}^{n\times d}$, and whose respective eigenvalues are $\ll_1,...,\ll_d$. Let $V:=E\ diag(\sqrt{\lambda_1},...,\sqrt{\lambda_d})$, so that $X=VV^T$. Rewrite the objective function as
$$Tr(BX)=Tr(V^TBV)=\sum_{i=1}^d \ll_i e_i^T B e_i.$$
Therefore $\tilde{X}=EE^T$ and $X^*=VV^T$.
\end{proof}
\begin{proof}[Proof of Lemma \ref{lemma4}]
The upper-triangular entries of $A$ are independent Bernoulli random variables conditional on $X$ and $\mu$, with a logistic link function. The coefficients should be nonnegative, as $X$ is constrained to be positive semidefinite.
\end{proof}
\begin{proof}[Proof of Theorem \ref{thm:ml}]
By Lemma \ref{lemma1}, we have that the solution to optimization \eqref{opt:pen-to-bound} is equal to the log-likelihood, up to addition of a constant. By Lemma \ref{lemma2}, we have that for a fixed $\gamma$, as $h^*\arr 0$, the quotient $s^*/\tilde{s}$ converges uniformly to one, where $s^*$ is the solution to \eqref{opt:pen-to-bound} and $\tilde{s}$ is the solution to optimization \eqref{opt:taylor}. The convergence is uniform over the choice of $B$ that is needed for Theorem \ref{thm:ml}. Because $s^*$ and $\tilde{s}$ do not diverge to $\pm\infty$, this also implies that $s^*-\tilde{s}$, and therefore the log-likelihood ratio, converges uniformly to zero. By Lemma \ref{lemma3}, the non-null eigenvectors of the $\arg\max$ of optimization \eqref{opt:taylor} are equivalent (up to rotation) to the first eigenvectors of $B$. Finally, by Lemma \ref{lemma4}, the eigenvalues that maximize the likelihood can be recovered using a logistic regression step. By Lemma \ref{lemma2}, the theorem would hold if we recovered the eigenvalues solving the approximate optimization \eqref{opt:taylor}. By finding the eigenvalues that exactly maximize the likelihood, we achieve a likelihood value at least as large.
\end{proof}

%\bibliographystyle{ACM-Reference-Format}
%\bibliography{sigproc}

\end{document}